\def\cN{\mathcal{N}}
\def\cH{\mathcal{H}}
\def\diag{\mbox{diag}}
\def\hypspace{\cH}
\def\repspace{\cN}
\def\lridge{{\sc Lasso + Ridge}}
\DeclareMathOperator*{\argmin}{arg\,min}
\DeclareMathOperator*{\expect}{\mathbb{E}}
\DeclareMathOperator*{\disc}{disc}
\newcommand{\R}{\mathbb{R}}
\theoremstyle{definition}
\newtheorem{definition}{Definition}
\theoremstyle{lemma}
\newtheorem{lemma}{Lemma}
\theoremstyle{theorem}
\newtheorem{theorem}{Theorem}
\theoremstyle{assumption}
\theoremstyle{proposition}
\newtheorem{apptheorem}{Theorem}
\newenvironment{customthm}[1]
  {\innercustomthm}
  {\endinnercustomthm}
\newcommand{\pF}{\hat{P}_\Phi^{F}}
\newcommand{\pC}{\hat{P}_\Phi^{CF}}
\newcommand{\discPQ}{\text{disc}(\pF,\pC)}
\newcommand{\discPQH}{\text{disc}_\hypspace(\pF,\pC) }
\newcommand{\discPQHl}{\text{disc}_{\hypspace_l}(\pF,\pC) }
\newcommand{\bfp}{\hat{\beta}^F(\Phi)}
\newcommand{\bcp}{\hat{\beta}^{CF}(\Phi)}
\newcommand{\mytitle}{Learning Representations for Counterfactual Inference}
\icmltitlerunning{\mytitle}
\begin{document}

\twocolumn[
\icmltitle{\mytitle}

\icmlauthor{Fredrik D. Johansson$^*$}{frejohk@chalmers.se}
\icmladdress{CSE, Chalmers University of Technology, G\"{o}teborg, SE-412 96, Sweden}
\icmlauthor{Uri Shalit$^*$}{shalit@cs.nyu.edu}
\icmlauthor{David Sontag}{dsontag@cs.nyu.edu}
\icmladdress{CIMS, New York University, 251 Mercer Street, New York, NY 10012 USA}
\icmlauthor{$^*$ Equal contribution}{}

\icmlkeywords{causal inference, representation learning, domain adaptation}

\vskip 0.3in
]

\begin{abstract}
Observational studies are rising in importance due to the widespread accumulation of data in fields such as healthcare, education, employment and ecology. We consider the task of answering counterfactual questions such as, ``Would this patient have lower blood sugar had she received a different medication?''.
We propose a new algorithmic framework for counterfactual inference which brings together ideas from domain adaptation and representation learning. In addition to a theoretical justification, we perform an empirical comparison with previous approaches to causal inference from observational data. Our deep learning algorithm significantly outperforms the previous state-of-the-art.

\end{abstract}

\section{Introduction}

Inferring causal relations is a fundamental problem in the sciences and commercial applications. The problem of causal inference is often framed in terms of \emph{counterfactual} questions \citep{lewis1973causation,rubin1974estimating,pearl2009causality} such as ``Would this patient have lower blood sugar had she received a different medication?'', or ``Would the user have clicked on this ad had it been in a different color?''.
In this paper we propose a method to learn representations suited for counterfactual inference, and show its efficacy in both simulated and real world tasks.

We focus on counterfactual questions raised by what are known as \emph{observational studies}. Observational studies are studies where interventions and outcomes have been recorded, along with appropriate context. For example, consider an electronic health record dataset collected over several years, where for each patient we have lab tests and past diagnoses, as well as data relating to their diabetic status, and the causal question of interest is which of two existing anti-diabetic medications A or B is better for a given patient. Observational studies are rising in importance due to the widespread accumulation of data in fields such as healthcare, education, employment and ecology. We believe machine learning will be called on more and more to help make better decisions in these fields, and that researchers should be careful to pay attention to the ways in which these studies differ from classic supervised learning, as explained in Section \ref{Sec:prob} below.

In this work we draw a connection between counterfactual inference and domain adaptation. We then introduce a form of regularization by enforcing similarity between the distributions of representations learned for populations with different interventions. For example, the representations for patients who received medication A versus those who received medication B. This reduces the variance from fitting a model on one distribution and applying it to another. In Section \ref{Sec:model} we give several methods for learning such representations. In Section \ref{Sec:theory} we show our methods approximately minimizes an upper bound on a regret term in the counterfactual regime. The general method is outlined in Figure \ref{fig:model}.
Our work has commonalities with recent work on learning fair representations \citep{zemel2013learning,louizos2015variational} and learning representations for transfer learning \citep{ben2007analysis,gani2015domain}. In all these cases the learned representation has some invariance to specific aspects of the data: either an identity of a certain group such as racial minorities for fair representations, or the identity of the data source for domain adaptation, or, in the case of counterfactual learning, the type of intervention enacted in each population.

In machine learning, counterfactual questions typically arise in problems where there is a learning agent which performs actions, and receives feedback or reward for that choice without knowing what would be the feedback for other possible choices. This is sometimes referred to as bandit feedback \citep{beygelzimer2010contextual}. This setup comes up in diverse areas, for example off-policy evaluation in reinforcement learning \citep{sutton1998reinforcement}, learning from ``logged implicit exploration data'' \citep{strehl2010learning} or ``logged bandit feedback'' \citep{swaminathan2015batch}, and in understanding and designing complex real world ad-placement systems \cite{bottou2013counterfactual}.
Note that while in contextual bandit or robotics applications the researcher typically knows the method underlying the action choice (e.g. the policy in reinforcement learning), in observational studies we usually do not have control or even a full understanding of the mechanism which chooses which actions are performed and which feedback or reward is revealed. For instance, for anti-diabetic medication, more affluent patients might be insensitive to the price of a drug, while less affluent patients could bring this into account in their choice.

Given that we do not know beforehand the particulars determining the choice of action, the question remains, how can we learn from data which course of action would have better outcomes. By bringing together ideas from representation learning and domain adaptation, our method offers a novel way to leverage increasing computation power and the rise of large datasets to tackle consequential questions of causal inference.

The contributions of our paper are as follows. First, we show how to formulate the problem of counterfactual inference as a domain adaptation problem, and more specifically a covariate shift problem. Second, we derive new families of representation algorithms for counterfactual inference: one is based on linear models and variable selection, and the other is based on deep learning of representations \citep{bengio2013representation}. Finally, we show that learning representations that encourage similarity (balance) between the treated and control populations leads to better counterfactual inference; this is in contrast to many methods which attempt to create balance by re-weighting samples \citep[e.g.,][]{bang2005doubly,dudik2011doubly,austin2011introduction,swaminathan2015batch}. We show the merit of learning balanced representations both theoretically in Theorem 1, and empirically in a set of experiments across two datasets.

\section{Problem setup}
\label{Sec:prob}

Let $\mathcal{T}$ be the set of potential interventions or actions we wish to consider, $\mathcal{X}$ the set of contexts, and $\mathcal{Y}$ the set of possible outcomes. For example, for a patient $x \in \mathcal{X}$ the set $\mathcal{T}$ of interventions of interest might be two different treatments, and the set of outcomes might be $\mathcal{Y} = [0,200]$ indicating blood sugar levels in mg/dL. For an ad slot on a webpage $x$, the set of interventions $\mathcal{T}$ might be all possible ads in the inventory that fit that slot, while the potential outcomes could be $\mathcal{Y} = \{click, no\_click\}$. For a context $x$ (e.g. patient, webpage), and for each potential intervention $t \in \mathcal{T}$, let $Y_t(x)\in \mathcal{Y}$ be the \emph{potential outcome} for $x$. The fundamental problem of causal inference is that only one potential outcome is observed for a given context $x$: even if we give the patient one medication and later the other, the patient is not in exactly the same state. In machine learning this type of partial feedback is often called ``bandit feedback''. The model described above is known as the Rubin-Neyman causal model \citep{rubin1974estimating,rubin2011causal}.

We are interested in the case of a binary action set $\mathcal{T} = \{0,1\}$, where action $1$ is often known as the ``treated'' and action $0$ is the ``control''. In this case the quantity $Y_1(x) - Y_0(x)$ is of high interest: it is known as the \emph{individualized treatment effect} (ITE) for context $x$ \cite{van2007causal,weiss2015treatment}. Knowing this quantity enables choosing the best of the two actions when confronted with the choice, for example choosing the best treatment for a specific patient.
However, the fact that we only have access to the outcome of one of the two actions prevents the ITE from being known. Another commonly sought after quantity is the \emph{average treatment effect}, $\text{ATE} = \mathbb{E}_{x\sim p(x)}[\text{ITE}(x)]$ for a population with distribution $p(x)$. In the binary action setting, we refer to the observed and unobserved outcomes as the \emph{factual} outcome $y^F(x)$, and \emph{counterfactual} outcome $y^{CF}(x)$ respectively.

A common approach for estimating the ITE is by \emph{direct modelling}: given $n$ samples $\{(x_i,t_i,y^F_i)\}_{i=1}^n$, where $y^F_i = t_i \cdot Y_{1}(x_i) + (1-t_i) Y_{0}(x_i)$, learn a function $h: \mathcal{X} \times \mathcal{T} \rightarrow \mathcal{Y}$ such that $h(x_i,t_i) \approx y^F_i$. The estimated transductive ITE is then:
\begin{equation}
  \label{eq:ite}
\hat{\text{ITE}}(x_i) = \begin{cases}
        y^F_i - h(x_i,1-t_i), & t_i = 1.\\
        h(x_i,1-t_i) - y^F_i, & t_i = 0.
        \end{cases}
\end{equation}

While in principle any function fitting model might be used for estimating the ITE \citep{prentice1976use,gelman2006data,chipman2010bart,wager2015estimation,weiss2015treatment}, it is important to note how this task differs from standard supervised learning. The problem is as follows: the observed sample consists of the set $\hat{P}^F = \{(x_i,t_i)\}_{i=1}^n$. However, calculating the ITE requires inferring the outcome on the set $\hat{P}^{CF} = \{(x_i,1-t_i)\}_{i=1}^n$. We call the set $\hat{P}^F \sim P^F$ the empirical \emph{factual distribution}, and the set $\hat{P}^{CF} \sim P^{CF}$ the empirical \emph{counterfactual distribution}, respectively. Because $P^F$ and $P^{CF}$ need not be equal, the problem of causal inference by counterfactual prediction might require inference over a different distribution than the one from which samples are given. In machine learning terms, this means that the feature distribution of the test set differs from that of the train set. This is a case of \emph{covariate shift}, which is a special case of domain adaptation \citep{daume2006domain,jiang2008literature,mansour2009domain}. A somewhat similar connection was noted in \citet{ScholkopfJPSZMJ2012} with respect to covariate shift, in the context of a very simple causal model.

Specifically, we have that $P^F(x,t) = P(x) \cdot P(t|x)$ and $P^{CF}(x,t) = P(x) \cdot P(\neg t|x)$.  The difference between the observed (factual) sample and the sample we must perform inference on lies precisely in the treatment assignment mechanism, $P(t|x)$. For example, in a randomized control trial, we typically have that $t$ and $x$ are independent. In the contextual bandit setting, there is typically an algorithm which determines the choice of the action $t$ given the context $x$. In observational studies, which are the focus of this work, the treatment assignment mechanism is not under our control and in general will not be independent of the context $x$. Therefore, in general, the counterfactual distribution will be different from the factual distribution.

\begin{figure}[tbp!]
\centering
\includegraphics[width=0.95\columnwidth]{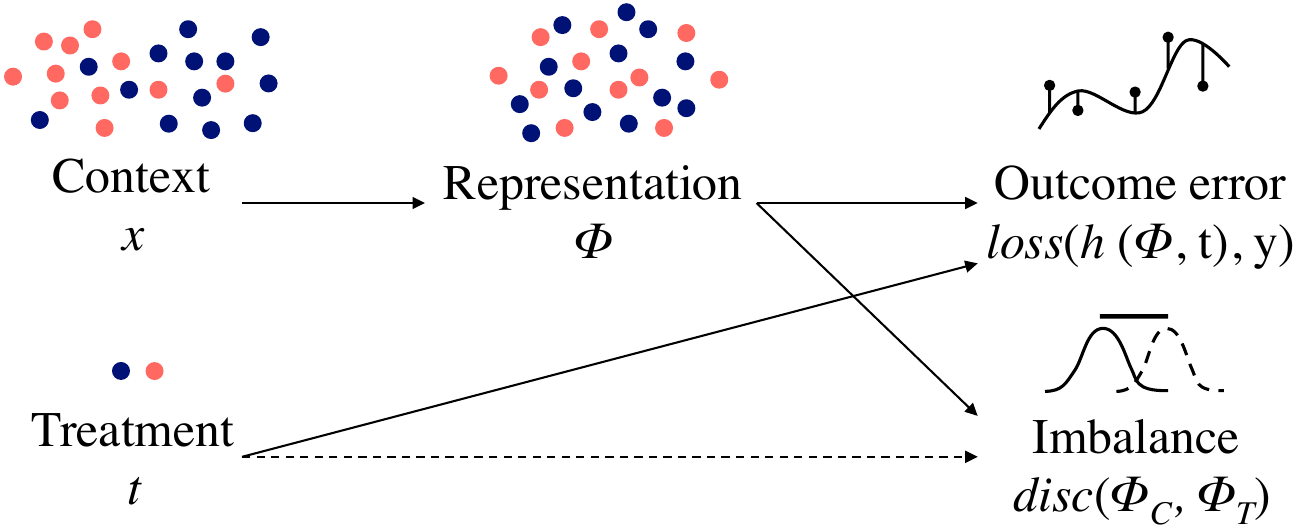}
\caption{\label{fig:model} Contexts $x$ are representated by $\Phi(x)$, which are used, with group indicator $t$, to predict the response $y$ while minimizing the imbalance in distributions measured by $\disc(\Phi_C, \Phi_T)$.}
\end{figure}

\section{Balancing counterfactual regression}\label{Sec:model}
\begin{algorithm}[tbp]
\caption{Balancing counterfactual regression}
\label{alg:model}
\begin{algorithmic}[1]
  \STATE \textbf{Input:} $X, T, Y^F; \hypspace, \repspace; \alpha, \gamma, \lambda$
  \STATE $\displaystyle \Phi^*, g^* = \argmin_{\Phi\in \repspace, g\in \hypspace} B_{\hypspace, \alpha, \gamma}(\Phi, g)$ \;\;\; \eqref{eq:modelbound}
  \vspace{0.1em}
  \STATE $h^* = \argmin_{h \in \cH} \frac{1}{n}\sum_{i=1}^n(h(\Phi,t_i) - y^F_i)^2 + \lambda\|h\|_{\hypspace} $
  \vspace{0.2em}
  \STATE \textbf{Output:} $h^*, \Phi^*$
\end{algorithmic}
\end{algorithm}
\vskip -5pt
We propose to perform counterfactual inference by amending the direct modeling approach, taking into account the fact that the learned estimator $h$ must generalize from the factual distribution to the counterfactual distribution.

Our method, see Figure \ref{fig:model}, learns a representation $\Phi : \mathcal{X} \rightarrow \mathbb{R}^d$, (either using a deep neural network, or by feature re-weighting and selection), and a function $h : \mathbb{R}^d \times \mathcal{T} \rightarrow \mathbb{R}$, such that the learned representation trades off three objectives: (1) enabling low-error prediction of the observed outcomes over the factual representation, (2) enabling low-error prediction of unobserved counterfactuals by taking into account relevant factual outcomes, and (3) the distributions of treatment populations are similar or \emph{balanced}.

We accomplish low-error prediction by the usual means of error minimization over a training set and regularization in order to enable good generalization error. We accomplish the second objective by a penalty  that encourages counterfactual predictions to be close to the nearest observed outcome from the respective treated or control set. Finally, we accomplish the third objective by minimizing the so-called \emph{discrepancy distance}, introduced by \citet{mansour2009domain}, which is a hypothesis class dependent distance measure tailored for domain adaptation. For hypothesis space $\hypspace$, we denote the discrepancy distance by $\disc_\hypspace$. See Section \ref{Sec:theory} for the formal definition and motivation. Other discrepancy measures such as Maximum Mean Discrepancy \citep{gretton2012mmd} could also be used for this purpose. 

Intuitively, representations that reduce the discrepancy between the treated and control populations prevent the learner from using ``unreliable'' aspects of the data when trying to generalize from the factual to counterfactual domains. For example, if in our sample almost no men ever received medication A, inferring how men would react to medication A is highly prone to error and a more conservative use of the gender feature might be warranted.

Let $X = \{x_i\}_{i=1}^n$, $T = \{t_i\}_{i=1}^n$, and $Y^F = \{y_i^F\}_{i=1}^n$ denote the observed units, treatment assignments and factual outcomes respectively. We assume $\mathcal{X}$ is a metric space with a metric $\mathrm{d}$. Let $j(i) \in \argmin_{j \in \{1\ldots n\} \text{ s.t. } t_j = 1-t_i} \mathrm{d}(x_j,x_i)$ be the nearest neighbor of $x_i$ among the group that received the opposite treatment from unit $i$. Note that the nearest neighbor is computed once, in the input space, and does \emph{not} change with the representation $\Phi$. The objective we minimize over representations $\Phi$ and hypotheses $h \in \hypspace$ is
\vspace{-2mm}
\begin{align}
& B_{\hypspace, \alpha, \gamma}(\Phi, h) = \frac{1}{n} \sum_{i=1}^n | h(\Phi(x_i),t_i)- y_i^F |\, + \label{eq:modelbound}\\
& \alpha \disc{}_\hypspace(\pF,\pC) +  \frac{\gamma}{n} \sum_{i=1}^n |h(\Phi(x_i),1-t_i)  - y_{j(i)}^F|~,\nonumber
\vspace{-2mm}
\end{align}
where $\alpha, \gamma > 0$ are hyperparameters to control the strength of the imbalance penalties, and $\disc{}$ is the discrepancy measure defined in \ref{sec:lindisc}.
When the hypothesis class $\hypspace$ is the class of linear functions, the term $\disc{}_\hypspace(\pF,\pC)$ has a closed form brought in \ref{sec:lindisc} below, and $h(\Phi,t_i) = h^\top [\Phi(x_i) \, \, t_i]$. For more complex hypothesis spaces there is in general no exact closed form for $\disc{}_\hypspace(\pF,\pC)$.

Once the representation $\Phi$ is learned, we fit a final hypothesis minimizing a regularized squared loss objective on the factual data. Our algorithm is summarized in Algorithm~\ref{alg:model}. Note that our algorithm involves two minimization procedures.
In Section \ref{Sec:theory} we motivate our method, by showing that our method of learning representations minimizes an upper bound on the regret error over the counterfactual distribution, using results of \citet{cortes2014domain}.

\subsection{Balancing variable selection} \label{sec:varsel}
A na\"{i}ve way of obtaining a balanced representation is to use only features that are already well balanced, i.e. features which have a similar distribution over both treated and control sets. However, imbalanced features can be highly predictive of the outcome, and should not always be discarded. A middle-ground is to restrict the influence of imbalanced features on the predicted outcome. We build on this idea by learning a sparse re-weighting of the features that minimizes the bound in Theorem~\ref{thrm1}. The re-weighting determines the influence of a feature by trading off its predictive capabilities and its balance.

We implement the re-weighting as a diagonal matrix $W$, forming the representation $\Phi(x) = Wx$, with $diag(W)$ subject to a simplex constraint to achieve sparsity. Let $\repspace = \{x \mapsto Wx : W = \diag(w), \; w_i \in [0,1], \; \sum_i w_i = 1\} $ denote the space of such representations. We can now apply Algorithm \ref{alg:model} with $\hypspace_l$ the space of linear hypotheses.
Because the hypotheses are linear, $\disc(\Phi)$ is a function of the distance between the weighted population means, see Section~\ref{sec:lindisc}. With $p = \mathbb{E}[t], c = p - 1/2$, $n_t = \sum_{i=1}^n t_i$, $\mu_1 = \frac{1}{n_t}\sum_{ i: t_i = 1}^n x_i$, and $\mu_0$ analogously defined,
\begin{align*}
\disc{}_{\hypspace_l}(XW) = c + \sqrt{c^2 + \|W(p\mu_1 - (1-p)\mu_0)]\|_2^2}
\end{align*}
To minimize the discrepancy, features $k$ that differ a lot between treatment groups will receive a smaller weight $w_k$. Minimizing the overall objective $B$, involves a trade-off between maximizing balance and predictive accuracy. We minimize~\eqref{eq:modelbound} using alternating sub-gradient descent.

\subsection{Deep neural networks}

Deep neural networks have been shown to successfully learn good representations of high-dimensional data in many tasks~\cite{bengio2013representation}. Here we show that they can be used for counterfactual inference and, crucially, for accommodating imbalance penalties.
We propose a modification of the standard feed-forward architecture with fully connected layers, see Figure~\ref{fig:neuralnet}. The first $d_r$ hidden layers are used to learn a representation $\Phi(x)$ of the input $x$. The output of the $d_r$:th layer is used to calculate the discrepancy  $\disc_\hypspace(\pF,\pC)$. The $d_o$ layers following the first $d_r$ layers take as additional input the treatment assignment $t_i$ and generate a prediction $h([\Phi(x_i), \, t_i])$  of the outcome.

\begin{figure}[t!]
  \centering
  \includegraphics[width=0.95\columnwidth]{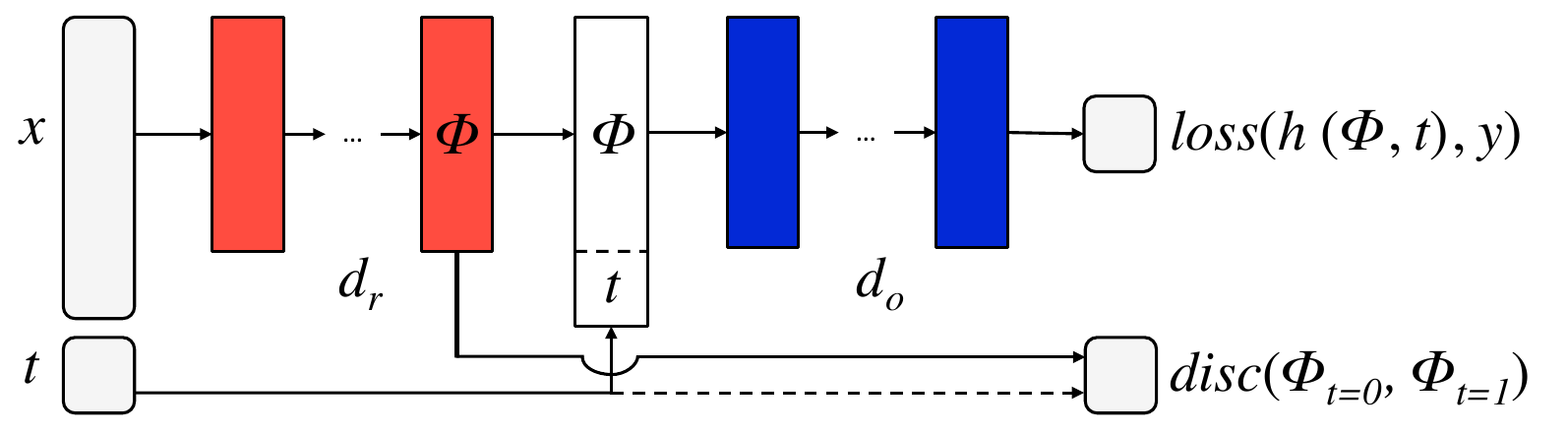}
  \caption{\label{fig:neuralnet}Neural network architecture.}
\end{figure}

\subsection{Non-linear hypotheses and individual effect}
We note that both in the case of variable re-weighting, and for neural nets with a single linear outcome layer,  the hypothesis space $\hypspace$ comprises linear functions of $[\Phi, t]$ and the discrepancy, $\disc_{\hypspace}(\Phi)$ can be expressed in closed-form. A less desirable consequence is that such models cannot capture difference in the individual treatment effect, as they involve no interactions between $\Phi(x)$ and $t$. Such interactions could be introduced by for example (polynomial) feature expansion, or in the case of neural networks, by adding non-linear layers after the concatenation $[\Phi(x), t]$. For both approaches however, we no longer have a closed form expression for $\disc{}_\hypspace(\pF,\pC)$.

\section{Theory}\label{Sec:theory}

In this section we derive an upper bound on the relative counterfactual generalization error of a representation function $\Phi$. The bound only uses quantities we can measure directly from the available data. In the previous section we gave several methods for learning representations which approximately minimize the upper bound.

Recall that for an observed context or instance $x_i \in \mathcal{X}$ with observed treatment $t_i \in \{0,1\}$, the two potential outcomes are $Y_0(x_i), Y_1(x_i) \in \mathcal{Y}$, of which we observe the \emph{factual} outcome $y^F_i = t_i Y_1(x_i) + (1-t_i)Y_0(x_i)$.
Let $ (x_1, t_1, y^F_1), \ldots, (x_n, t_n, y^F_n)$ be a sample from the factual distribution.
Similarly, let $(x_1, 1-t_1, y^{CF}_1), \ldots, (x_n, 1-t_n, y^{CF}_n)$  be the counterfactual sample. Note that while we know the factual outcomes $y^F_i$, we do not know the counterfactual outcomes $y^{CF}_i$.
Let $\Phi : \mathcal{X} \rightarrow \R^d$ be a representation function, and let $\mathcal{R}(\Phi)$ denote its range.
Denote by $\hat{P}_\Phi^F$ the empirical distribution over the representations and treatment assignments $ (\Phi(x_1), t_1),  \ldots, (\Phi(x_n), t_n)$, and similarly $\hat{P}_\Phi^{CF}$ the empirical distribution over the representations and counterfactual treatment assignments $ (\Phi(x_1), 1-t_1),  \ldots, (\Phi(x_n), 1-t_n)$.
Let $\hypspace_l$ be the hypothesis set of linear functions $\beta : \mathcal{R}(\Phi) \times \{0,1\} \rightarrow \mathcal{Y}$.

\begin{definition}[\citealt{mansour2009domain}]
Given a hypothesis set $\hypspace$ and a loss function $L$, the empirical discrepancy between
the empirical distributions $\pF$ and $\pC$ is:
\begin{align*}
&\discPQH = \\
&\max_{\beta, \beta' \in \hypspace} \left| \underset{x \sim \pF}{\mathbb{E}}[L(\beta(x),\beta'(x))] - \underset{x \sim \pC}{\mathbb{E}}[L(\beta(x),\beta'(x))]\right|
\end{align*}
\end{definition}

where $L$ is a loss function $L : \mathcal{Y} \times \mathcal{Y} \rightarrow \R$ with weak Lipschitz constant $\mu$ relative to $\hypspace$ \footnote{When $L$ is the squared loss we can show that if  $\|\Phi(x)\|_2 \leq m$ and $|y|\leq M$, and the hypothesis set $\mathcal{H}$ is that of linear functions with norm bounded by $m/\lambda$, then $\mu \leq 2M (1 + m^2/\lambda )$.}.
Note that the discrepancy is defined with respect to a hypothesis class and a loss function, and is therefore very useful for obtaining generalization bounds involving different distributions. Throughout this section we always have $L$ denote the squared loss.
We prove the following, based on \citet{cortes2014domain}:
\begin{theorem}\label{thrm1}
For a sample $\{(x_i,t_i,y_i^F)\}_{i=1}^n$,  $x_i \in \mathcal{X}$, $t_i \in \{0,1\}$ and $y_i \in \mathcal{Y} $, and a given representation function $\Phi :\mathcal{X} \rightarrow \R^d$, let $\pF = (\Phi(x_1), t_1),  \ldots, (\Phi(x_n), t_n)$, $\pC = (\Phi(x_1), 1-t_1),  \ldots, (\Phi(x_n), 1-t_n) $. We assume that $\mathcal{X}$ is a metric space with metric $\mathrm{d}$, and that the potential outcome functions $Y_0(x)$  and $Y_1(x)$ are Lipschitz continuous with constants $K_0$ and $K_1$ respectively, such that $\mathrm{d}(x_a,x_b) \leq c \implies |Y_t(x_a) - Y_t(x_b)| \leq K_t \cdot c\,$ \, for $t=0,1$.

Let $\hypspace_l \subset\R^{d+1}$ be the space of linear functions $\beta: \mathcal{X} \times \left\{0,1\right\} \rightarrow \mathcal{Y}$, and for $\beta \in \hypspace_l$, let $\mathcal{L}_{P}(\beta) = \mathbb{E}_{(x,t,y) \sim P} \left[L(\beta(x,t),y)\right]$ be the expected loss of $\beta$ over distribution $P$. Let $r = max\left(\mathbb{E}_{(x,t) \sim P^F}\left[\|[\Phi(x), t]\|_2\right],\mathbb{E}_{(x,t) \sim P^{CF}}\left[\|[\Phi(x), t]\|_2\right]\right)$ be the maximum expected radius of the distributions.
For $\lambda >0$, let $\bfp = \argmin_{\beta \in \hypspace_l} \mathcal{L}_{\pF}(\beta) + \lambda \|\beta\|_2^2$, and  $\bcp$ similarly for $\pC$, i.e. $\bfp$ and $\bcp$ are the ridge regression solutions for the factual and counterfactual empirical distributions, respectively.

Let $\hat{y}_i^F(\Phi,h) = h^\top [\Phi(x_i) , t_i]$ and $\hat{y}_i^{CF}(\Phi,h) = h^\top [\Phi(x_i) , \, 1-t_i]$ be the outputs of the hypothesis $h \in \hypspace_l$ over the representation $\Phi(x_i)$ for the factual and counterfactual settings of $t_i$, respectively. Finally, for each $i, j \in \{1 \ldots n\}$, let $\mathrm{d}_{i,j} \equiv \mathrm{d}(x_i,x_j)$ and $j(i) \in \argmin_{j \in \{1\ldots n\} \text{ s.t. } t_j = 1-t_i} \mathrm{d}(x_j,x_i)$ be the nearest neighbor in $\mathcal{X}$ of $x_i$ among the group that received the opposite treatment from unit $i$.
Then for \emph{both} $Q = P^F$ and $Q=P^{CF}$ we have:

\begin{align}
&\frac{\lambda}{\mu r} (\mathcal{L}_{Q}(\bfp)-\mathcal{L}_{Q}(\bcp))^2 \leq \nonumber \\
& \discPQHl \;\; + \label{eq:disc} \\
& \min_{h\in \hypspace_l} \frac{1}{n} \sum_{i=1}^n \left( | \hat{y}_i^F(\Phi,h)- y_i^F |  + |\hat{y}_i^{CF}(\Phi,h)  - y_i^{CF}|\right)\label{eq:eta} \leq \\
& \discPQHl + \nonumber \\
& \min_{h\in \hypspace_l} \frac{1}{n} \sum_{i=1}^n \left( | \hat{y}_i^F(\Phi,h)- y_i^F |  + |\hat{y}_i^{CF}(\Phi,h)  - y_{j(i)}^F|\right) + \label{eq:g}\\
& \frac{K_0}{n} \sum_{i: t_i=1} \mathrm{d}_{i,j(i)} + \frac{K_1}{n} \sum_{i: t_i=0} \mathrm{d}_{i,j(i)} \label{eq:nn} .
\end{align}
\end{theorem}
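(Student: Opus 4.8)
The plan is to establish the two inequalities separately: the outer inequality \eqref{eq:disc} relating the ridge-regression regret to the discrepancy plus an ideal joint error, and the inner inequality bounding that joint error by the nearest-neighbor reconstruction term \eqref{eq:g} plus the Lipschitz correction \eqref{eq:nn}. For brevity write $\eta_\Phi := \min_{h\in\hypspace_l}\frac{1}{n}\sum_{i=1}^n\big(|\hat{y}_i^F(\Phi,h)-y_i^F| + |\hat{y}_i^{CF}(\Phi,h)-y_i^{CF}|\big)$ for the quantity in \eqref{eq:eta}. The first inequality is an adaptation of the domain-adaptation bound for regularized regression of \citet{cortes2014domain}, with the factual distribution $\pF$ playing the role of the source and $\pC$ that of the target (and symmetrically); the second is an elementary triangle-inequality argument exploiting the Lipschitz continuity of the potential-outcome functions.

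For the first inequality I would start from the optimality and strong convexity of the two ridge solutions. Since $\bfp$ minimizes the $2\lambda$-strongly convex factual objective $\mathcal{L}_{\pF}(\beta)+\lambda\|\beta\|_2^2$ and $\bcp$ the analogous counterfactual objective, strong convexity together with the two optimality inequalities (after the regularization terms cancel) gives $\lambda\|\bfp-\bcp\|_2^2 \le \big(\mathcal{L}_{\pF}(\bcp)-\mathcal{L}_{\pC}(\bcp)\big)-\big(\mathcal{L}_{\pF}(\bfp)-\mathcal{L}_{\pC}(\bfp)\big)$. Each bracketed term is a difference of expected losses-against-labels across the two distributions; to relate it to $\discPQHl$, which is defined through the loss between \emph{pairs of hypotheses} rather than against labels, I would insert the ideal joint hypothesis $h^\star$ attaining $\eta_\Phi$ and split $\mathcal{L}_{\pF}(\beta)-\mathcal{L}_{\pC}(\beta)$ into (i) a term comparing $\beta$ to $h^\star$ under each distribution, controlled by $\eta_\Phi$ via the triangle inequality for $\sqrt{L}=|\cdot|$, and (ii) the term $\mathbb{E}_{\pF}L(\beta,h^\star)-\mathbb{E}_{\pC}L(\beta,h^\star)$, bounded by $\discPQHl$ by definition. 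Finally I would convert the resulting bound on $\|\bfp-\bcp\|_2$ into a bound on the loss gap $\mathcal{L}_Q(\bfp)-\mathcal{L}_Q(\bcp)$ using the weak Lipschitz property of the squared loss relative to $\hypspace_l$ (constant $\mu$) together with the expected radius $r$, which controls how much the loss changes per unit change in the linear hypothesis. Squaring and collecting $\lambda,\mu,r$ produces the factor $\tfrac{\lambda}{\mu r}$; the argument is symmetric in $\pF,\pC$, so it holds for both $Q=P^F$ and $Q=P^{CF}$.

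The second inequality is comparatively direct. For any fixed $h\in\hypspace_l$ and any $i$, the triangle inequality gives $|\hat{y}_i^{CF}(\Phi,h)-y_i^{CF}| \le |\hat{y}_i^{CF}(\Phi,h)-y_{j(i)}^F| + |y_{j(i)}^F-y_i^{CF}|$. The key observation is that the counterfactual label of unit $i$ and the factual label of its opposite-treatment neighbor $j(i)$ are values of the \emph{same} potential-outcome function: if $t_i=1$ then $y_i^{CF}=Y_0(x_i)$ and $y_{j(i)}^F=Y_0(x_{j(i)})$, so $|y_{j(i)}^F-y_i^{CF}|\le K_0\,\mathrm{d}_{i,j(i)}$, while if $t_i=0$ the same step uses $Y_1$ and yields $K_1\,\mathrm{d}_{i,j(i)}$. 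Summing over $i$ and splitting according to $t_i$ produces exactly \eqref{eq:nn}. Since this correction is independent of $h$, adding it to the factual error, evaluating at the minimizer of the reconstruction objective, and then taking the minimum over $h$ on the left transfers the inequality to $\eta_\Phi$, giving \eqref{eq:eta} $\le$ \eqref{eq:g} $+$ \eqref{eq:nn}.

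I expect the main obstacle to be the first inequality, specifically the bookkeeping that turns the Cortes--Mohri stability analysis into the precise constant $\tfrac{\lambda}{\mu r}$ and the squared loss gap. Two points require care: reconciling the squared-loss discrepancy $\discPQHl$ with the absolute-value form of $\eta_\Phi$ (handled by passing to $\sqrt{L}=|\cdot|$ and using its triangle inequality), and verifying that inserting the ideal joint hypothesis does not introduce a dependence that breaks the symmetry between $P^F$ and $P^{CF}$. The nearest-neighbor step, by contrast, is routine once the matching of $K_0,K_1$ to the treated and control subpopulations is set up correctly.
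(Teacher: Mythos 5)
Your proposal matches the paper's proof in structure and substance: the first inequality is precisely the Cortes--Mohri domain-adaptation bound for regularized regression (which the paper simply invokes as a black box, citing it as implicit in their Theorem 2, rather than re-deriving the strong-convexity/admissibility bookkeeping you sketch), and the second inequality is the paper's Lemma 1 --- the same triangle-inequality-plus-Lipschitz step using that $y_i^{CF}$ and $y_{j(i)}^F$ are both values of $Y_{1-t_i}$, summed and split by treatment group, with the $h$-independent correction passing through the minimum over $h$. Both steps are correct as you describe them.
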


The proof is in the supplemental material.

Theorem \ref{thrm1} gives, for all fixed representations $\Phi$, a bound on the relative error for a ridge regression model fit on the factual outcomes and evaluated on the counterfactual, as compared with ridge regression had it been fit on the unobserved counterfactual outcomes. It does not take into account how $\Phi$ is obtained, and applies even if $h(\Phi(x),t)$ is not convex in $x$, e.g. if $\Phi$ is a neural net. Since the bound in the theorem is true for all representations $\Phi$, we can attempt to minimize it over $\Phi$, as done in Algorithm \ref{alg:model}.

The term on line \eqref{eq:eta} of the bound includes the unknown counterfactual outcomes $y^{CF}_i$. It measures how well could we in principle fit the factual and counterfactual outcomes together using a linear hypothesis over the representation $\Phi$. For example, if the dimension of the representation is greater than the number of samples, and in addition if there exist constants $b$ and $\epsilon$ such that $|y_i^F - y_i^{CF} - b| \leq \epsilon$, then this term is upper bounded by $\epsilon$. In general however, we cannot directly control its magnitude.

The term on line \eqref{eq:disc} measures the discrepancy between the factual and counterfactual distributions over the representation $\Phi$. In \ref{sec:lindisc} below, we show that this term is closely related to the norm of the difference in means between the representation of the control group and the treated group. A representation for which the means of the treated and control are close (small value of \eqref{eq:disc}), but at the same time allows for a good prediction of the factuals and counterfactuals (small value of \eqref{eq:eta}), is guaranteed to yield structural risk minimizers with similar generalization errors between factual and counterfactual.

We further show that the term on line \eqref{eq:eta}, which cannot be evaluated since we do not know $y^{CF}_i$, can be upper bounded by a sum of the terms on lines \eqref{eq:g} and \eqref{eq:nn}. The term \eqref{eq:g} includes two empirical data fitting terms: $| \hat{y}_i^F(\Phi,v)- y_i^F |$ and  $|\hat{y}_i^{CF}(\Phi,v)  - y_{j(i)}^F|$. The first is simply fitting the observed factual outcomes using a linear function over the representation $\Phi$. The second term is a form of nearest-neighbor regression, where the counterfactual outcomes for a treated (resp. control) instance are fit to the most similar factual outcome among the control (resp. treated) set, where similarity is measured in the original space $\mathcal{X}$.
Finally, the term on line \eqref{eq:nn}, is the only quantity which is independent of the representation $\Phi$. It measures the average distance between each treated instance to the nearest control, and vice-versa, scaled by the Lipschitz constants of the true treated and control outcome functions. This term will be small when: (a) the true outcome functions $Y_0(x)$ and $Y_1(x)$ are relatively smooth, and (b) there is overlap between the treated and control groups, leading to small average nearest neighbor distance across the groups. It is well-known that when there is not much overlap between treated and control, causal inference in general is more difficult since the extrapolation from treated to control and vice-versa is more extreme \citep{rosenbaum2009design}.

The upper bound in Theorem \ref{thrm1} suggests the following approach for counterfactual regression. First minimize the terms \eqref{eq:disc} and \eqref{eq:g} as functions of the representation $\Phi$. Once $\Phi$ is obtained, perform a ridge regression on the factual outcomes using the representations $\Phi(x)$ and the treatment assignments as input. The terms in the bound ensure that $\Phi$ would have a good fit for the data (term \eqref{eq:g}), while removing aspects of the treated and control which create a large discrepancy term \eqref{eq:disc}). For example, if there is a feature which is much more strongly associated with the treatment assignment than with the outcome, it might be advisable to not use it \cite{pearl2011invited}.

\subsection{Linear discrepancy}
\label{sec:lindisc}
A straightforward calculation shows that for a class $\hypspace_l$ of linear hypotheses, $$\disc_{\hypspace_l}(P,Q) = \|\mu_2(P) - \mu_2(Q)\|_2~.$$ Here, $\|A\|_2$ is the spectral norm of $A$ and $\mu_2(P) = \expect_{x \sim P}[xx^\top]$ is the second-order moment of $x \sim P$. In the special case of counterfactual inference, $P$ and $Q$ differ only in the treatment assignment. Specifically,
\begin{align}
\discPQ &=&
\left\|
\begin{bmatrix}
    0_{d,d} & v \\
    v^\top & 2p-1
\end{bmatrix}
\right\|_2 \\
&=& p - \frac{1}{2} + \sqrt{\frac{(2p-1)^2}{4} + \|v\|_2^2}
\label{eq:lindisc}
\end{align}
where $v = \expect_{(x,t) \sim \pF}[\Phi(x)\cdot t] -  \expect_{(x,t) \sim \pF}[\Phi(x) \cdot (1-t)]$ and $p = \expect[t]$.

Let $\mu_1(\Phi) = \expect_{(x,t) \sim \pF}[\Phi(x)|t=1]$ and $\mu_0(\Phi) = \expect_{(x,t) \sim \pF}[\Phi(x)|t=0]$ be the treated and control means in $\Phi$ space. Then $v = p\cdot \mu_1(\Phi) - (1-p)\cdot \mu_0(\Phi)$, exactly the difference in means between the treated and control groups, weighted by their respective sizes.
As a consequence, minimizing the discrepancy with linear hypotheses constitutes matching means in feature space.

\section{Related work}

Counterfactual inference for determining causal effects  in observational studies has been studied extensively in statistics, economics, epidemiology and sociology \citep{morgan2014counterfactuals,robins2000marginal,rubin2011causal,chernozhukov2013inference} as well as in machine learning \citep{langford2011doubly,bottou2013counterfactual,swaminathan2015batch}.

Non-parametric methods do not attempt to model the relation between the context, intervention, and outcome. The methods include nearest-neighbor matching, propensity score matching, and propensity score re-weighting \citep{rosenbaum1983central, rosenbaum2002observational,austin2011introduction}.

Parametric methods, on the other hand, attempt to concretely model the relation between the context, intervention, and outcome. These methods include any type of regression including linear and logistic regression \citep{prentice1976use,gelman2006data}, random forests \citep{wager2015estimation}  and regression trees \citep{chipman2010bart}.

Doubly robust methods combine aspects of parametric and non-parametric methods, typically by using a propensity score weighted regression \citep{bang2005doubly,dudik2011doubly}. They are especially of use when the treatment assignment probability is known, as is the case for off-policy evaluation or learning from logged bandit data. Once the treatment assignment probability has to be estimated, as is the case in most observational studies, their efficacy might wane considerably \citep{kang2007demystifying}.

\citet{tian2014simple} presented one of the few methods that achieve balance by transforming or selecting covariates, modeling  interactions between treatment and covariates.

\section{Experiments}
We evaluate the two variants of our algorithm proposed in Section~\ref{Sec:model} with focus on two questions: 1) What is the effect of imposing imbalance regularization on representations? 2) How do our methods fare against established methods for counterfactual inference? We refer to the variable selection method of Section~\ref{sec:varsel} as \emph{Balancing Linear Regression} ({\sc BLR}) and  the neural network approach as {\sc BNN} for \emph{Balancing Neural Network}.

We report the RMSE of the estimated individual treatment effect, denoted  $\epsilon_{ITE}$, and the absolute error in estimated average treatment effect, denoted $\epsilon_{ATE}$, see Section~\ref{Sec:prob}.
Further, following~\citet{hill2011bayesian}, we report the \emph{Precision in Estimation of Heterogeneous Effect} (PEHE), $\text{PEHE} = \sqrt{\frac{1}{n} \sum_{i=1}^n \left(\hat{y}_1(x_i) - \hat{y}_0(x_i) - (Y_1(x_i)-Y_0(x_i))\right)^2}$. Unlike for ITE, obtaining a good (small) PEHE requires accurate estimation of both the factual and counterfactual responses, not just the counterfactual.
Standard methods for hyperparameter selection, including cross-validation, are unavailable when training counterfactual models on real-world data, as there are no samples from the counterfactual outcome. In our experiments, all outcomes are simulated, and we have access to counterfactual samples. To avoid fitting parameters to the test set, we generate multiple repeated experiments, each with a different outcome function and pick hyperparameters once, for all models (and baselines), based on a held-out set of experiments. While not possible for real-world data, this approach gives an indication of the robustness of the parameters.

The neural network architectures used for all experiments consist of fully-connected ReLU layers trained using RMSProp, with a small $l_2$ weight decay, $\lambda=10^{-3}$. We evaluate two architectures. {\sc BNN-4-0} consists of 4 ReLU representation-only layers and a single linear output layer, $d_r = 4, d_o = 0$. {\sc BNN-2-2} consists of 2 ReLU representation-only layers, 2 ReLU output layers after the treatment has been added, and a single linear output layer, $d_r = 2, d_o=2$, see Figure~\ref{fig:neuralnet}. For the IHDP data we use layers of 25 hidden units each. For the News data representation layers have 400 units and output layers 200 units. The nearest neighbor term, see Section~\ref{Sec:model}, did not improve empirical performance, and was omitted for the BNN models. For the neural network models, the hypothesis and the representation were fit jointly.

We include several different linear models in our comparison, including ordinary linear regression ({\sc OLS}) and doubly robust linear regression (DR)~\cite{bang2005doubly}. We also include a method were variables are first selected using LASSO and then used to fit a ridge regression ({\sc Lasso + Ridge}). Regularization parameters are picked based on a held out sample. For DR, we estimate propensity scores using logistic regression and clip weights at 100. For the News dataset (see below), we perform the logistic regression on the first 100 principal components of the data.

Bayesian Additive Regression Trees (BART)~\cite{chipman2010bart} is a non-linear regression model which has been used successfully for counterfactual inference in the past~\cite{hill2011bayesian}. We compare our results to BART using the implementation provided in the BayesTree R-package~\cite{BayesTree}. Like~\cite{hill2011bayesian}, we do not attempt to tune the parameters, but use the default. Finally, we include a standard feed-forward neural network, trained with 4 hidden layers, to predict the factual outcome based on $X$ and $t$, without a penalty for imbalance. We refer to this as {\sc NN-4}.

\subsection{Simulation based on real data -- IHDP}
\citet{hill2011bayesian} introduced a semi-simulated dataset based on the Infant Health and Development Program (IHDP). The IHDP data has covariates from a real randomized experiment, studying the effect of high-quality child care and home visits on future cognitive test scores. The experiment proposed by \citet{hill2011bayesian} uses a simulated outcome and artificially introduces imbalance between treated and control subjects by removing a subset of the treated population. In total, the dataset consists of 747 subjects (139 treated, 608 control), each represented by 25 covariates measuring properties of the child and their mother. For details, see \citet{hill2011bayesian}. We run 100 repeated experiments for hyperparameter selection and 1000 for evaluation, all with the log-linear response surface implemented as setting ``A" in the NPCI package~\cite{npci}.

\subsection{Simulation based on real data -- News }
We introduce a new dataset, simulating the opinions of a media consumer exposed to multiple news items. Each item is consumed either on a mobile device or on desktop. The units are different news items represented by word counts $x_i \in \mathbb{N}^V$, and the outcome $y^F(x_i) \in \mathbb{R}$ is the readers experience of $x_i$. The intervention $t \in \{0, 1\}$ represents the viewing device, desktop $(t=0)$ or mobile $(t=1)$.
We assume that the consumer prefers to read about certain topics on mobile. To model this, we train a topic model on a large set of documents and let $z(x) \in \mathbb{R}^k$ represent the topic distribution of news item $x$. We define two centroids in topic space, $z^c_1$ (mobile), and $z^c_0$ (desktop), and let the readers opinion of news item $x$ on device $t$ be determined by the similarity between $z(x)$ and $z^c_t$,
$
y^F(x_i) = C \left(z(x)^\top z^c_0 + t_i\cdot z(x)^\top z^c_1\right) + \epsilon ~
$,
where $C$ is a scaling factor and $\epsilon \sim \mathcal{N}(0,1)$. Here, we let the mobile centroid, $z^c_1$ be the topic distribution of a randomly sampled document, and $z^c_0$ be the average topic representation of all documents. We further assume that the assignment of a news item $x$ to a device $t \in \{0,1\}$ is biased towards the device preferred for that item. We model this using the softmax function,
$
p(t = 1 \mid x) = \frac{e^{\kappa\cdot z(x)^\top z^c_1}}{e^{\kappa\cdot z(x)^\top z^c_0} + e^{\kappa\cdot z(x)^\top z^c_1}}
$,
where $\kappa \geq 0$ determines the strength of the bias. Note that $\kappa = 0$ implies a completely random device assignment.

We sample $n=5000$ news items and outcomes according to this model, based on 50 LDA topics, trained on documents from the NY Times corpus (downloaded from UCI~\cite{bagofwords}). The data available to the algorithms are the raw word counts, from a vocabulary of $k=3477$ words, selected as union of the most $100$ probable words in each topic. We set the scaling parameters to $C=50, \kappa=10$ and sample $50$ realizations for evaluation. Figure~\ref{fig:topic_model_data} shows a visualization of the outcome and device assignments for a  sample of 500 documents. Note that the device assignment becomes increasingly random, and the outcome lower, further away from the centroids.

\begin{figure}[t!]
  \centering
  \begin{subfigure}[b]{0.49\columnwidth}
    \centering
    \includegraphics[width=1.0\textwidth]{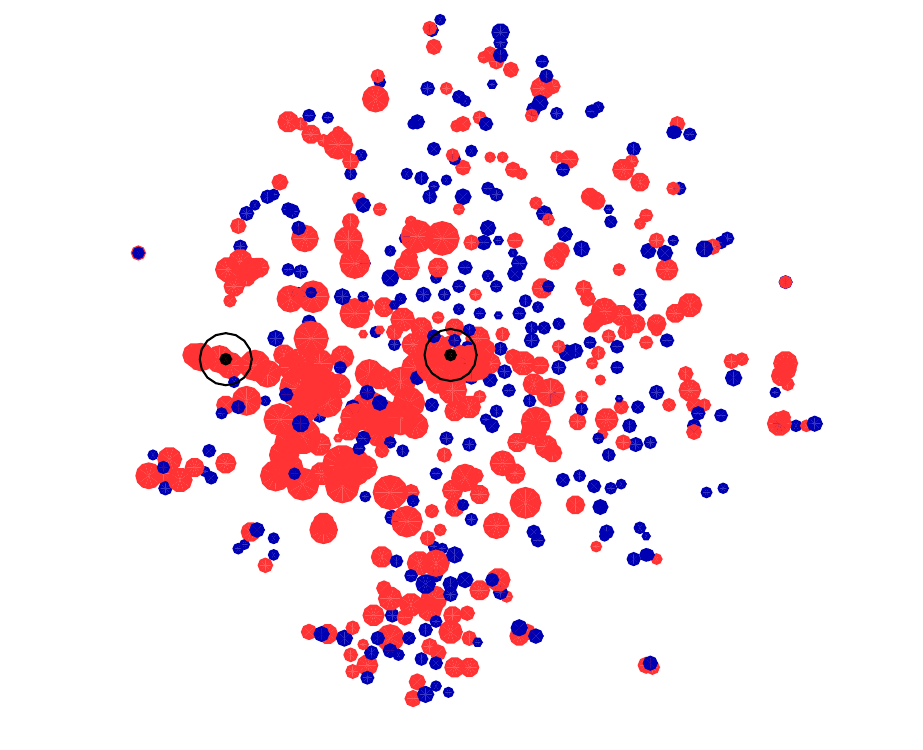}
  \end{subfigure}
  \begin{subfigure}[b]{0.49\columnwidth}
    \centering
    \includegraphics[width=1.0\textwidth]{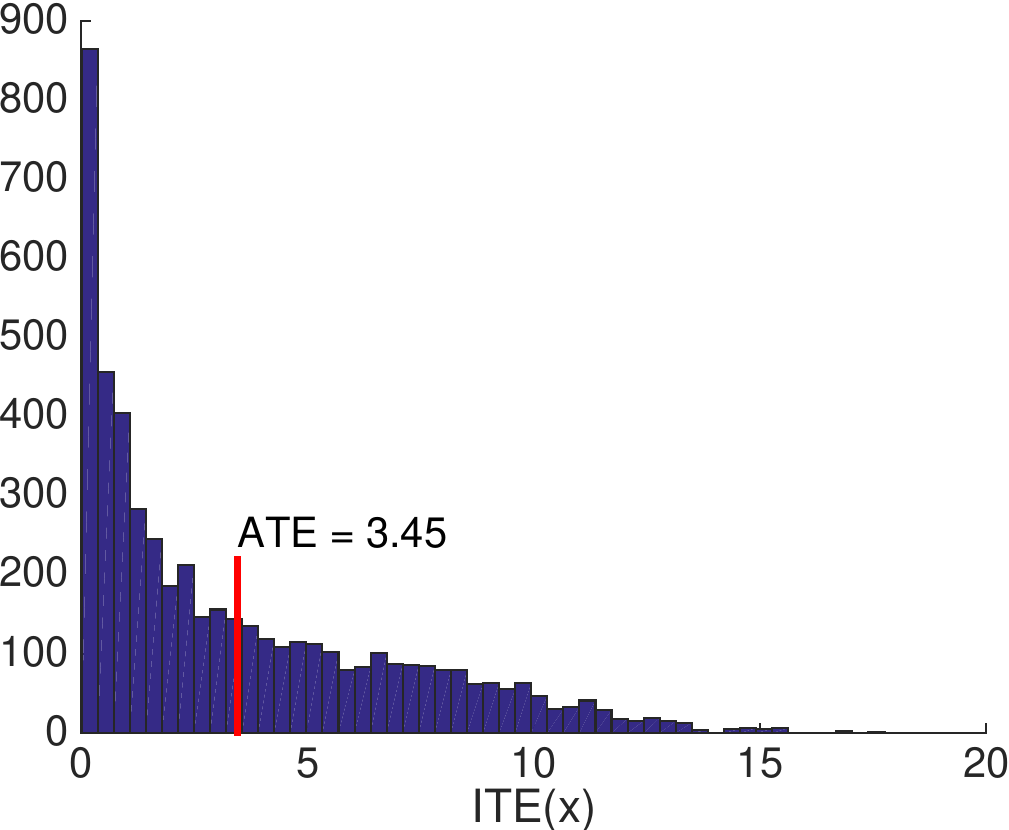}
  \end{subfigure}
  \caption{\label{fig:topic_model_data}Visualization of one of the News sets (left). Each dot represents a single news item $x$. The radius represents the outcome $y(x)$, and the color the treatment $t$. The two black dots represent the two centroids. Histogram of ITE in News (right).}
\end{figure}

\begin{table}[t!]
  \caption{IHDP. Results and standard errors for 1000 repeated experiments. (Lower is better.)
  Proposed methods: {\sc BLR, BNN-4-0} and {\sc BNN-2-2}. $\dagger$~\cite{chipman2010bart}}
  \label{tbl:ihdp_results}
  \vskip 0.15in
  \begin{center}
    \begin{small}
      \begin{sc}
      \begin{tabular}{lccc}
        \hline
        \abovespace\belowspace
        & $\epsilon_{ITE}$ & $\epsilon_{ATE}$ & PEHE \\
        \hline
        \multicolumn{4}{l}{Linear outcome} \\
        OLS       & $4.6\pm 0.2$ & $0.7\pm 0.0$ & $5.8\pm 0.3$  \\
        Doubly Robust   & $3.0\pm 0.1$ & $0.2\pm 0.0$ & $5.7\pm 0.3$  \\
        Lasso + Ridge  & $2.8\pm 0.1$ & $0.2\pm 0.0$ & $5.7\pm 0.2$ \\
        BLR      & $2.8\pm 0.1$ & $0.2\pm 0.0$ & $5.7\pm 0.3$   \\
        BNN-4-0  & $3.0\pm 0.0$ & $0.3\pm 0.0$ & $5.6\pm 0.3$   \\
        \hline
        \multicolumn{4}{l}{Non-linear outcome} \\
        NN-4   & $2.0\pm 0.0$ & $0.5\pm 0.0$ & $1.9\pm 0.1$   \\
        BART$^\dagger$   & $2.1\pm 0.2$ & $0.2\pm 0.0$ & $1.7\pm 0.2$  \\
        BNN-2-2   & $\mathbf{1.7\pm 0.0}$ & $0.3\pm 0.0 $ & $\mathbf{1.6\pm 0.1}$  \\

        \hline
      \end{tabular}
    \end{sc}
    \end{small}
  \end{center}
\end{table}

\begin{table}[t!]
  \caption{News. Results and standard errors for 50 repeated experiments. (Lower is better.)
  Proposed methods: {\sc BLR, BNN-4-0} and {\sc BNN-2-2}. $\dagger$~\cite{chipman2010bart} }
  \label{tbl:topic_results}
  \vskip 0.15in
  \begin{center}
    \begin{small}
      \begin{sc}
      \begin{tabular}{lccc}
        \hline
        \abovespace\belowspace
        & $\epsilon_{ITE}$ & $\epsilon_{ATE}$ & PEHE \\
        \hline
        \multicolumn{4}{l}{Linear outcome} \\
        OLS      & $3.1\pm 0.2$ & $0.2\pm 0.0$ & $3.3\pm 0.2$   \\
        Doubly Robust   & $3.1\pm 0.2$ & $0.2\pm 0.0$ & $3.3\pm 0.2$   \\
        Lasso + Ridge   & $2.2\pm 0.1$ & $0.6\pm 0.0$ & $3.4\pm 0.2$   \\
        BLR      & $2.2\pm 0.1$ & $0.6\pm 0.0$ & $3.3\pm 0.2$   \\
        BNN-4-0  & $2.1\pm 0.0$ & $0.3\pm 0.0$ & $3.4\pm 0.2$   \\
        \hline
        \multicolumn{4}{l}{Non-linear outcome} \\
        NN-4      & $2.8\pm 0.0$ & $1.1\pm 0.0$ & $3.8\pm 0.2$   \\
        BART$^\dagger$   & $5.8\pm 0.2$ & $0.2\pm 0.0$ & $3.2\pm 0.2$   \\
        BNN-2-2   & $\mathbf{2.0\pm 0.0}$ & $0.3\pm 0.0$ & $\mathbf{2.0\pm 0.1}$   \\

        \hline
      \end{tabular}
    \end{sc}
    \end{small}
  \end{center}
\end{table}

\begin{figure}[t]
  \centering
  \begin{subfigure}[b]{0.49\columnwidth}
    \centering
    \includegraphics[width=1.0\textwidth]{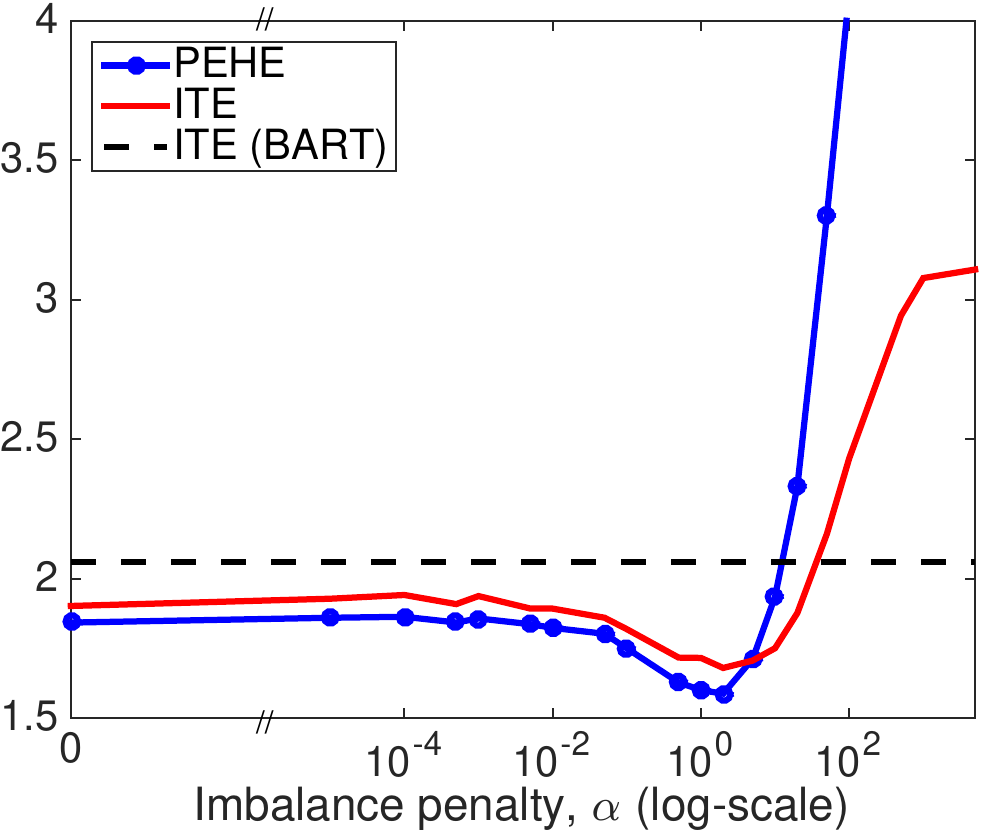}
  \end{subfigure}
  \begin{subfigure}[b]{0.49\columnwidth}
    \centering
    \includegraphics[width=1.0\textwidth]{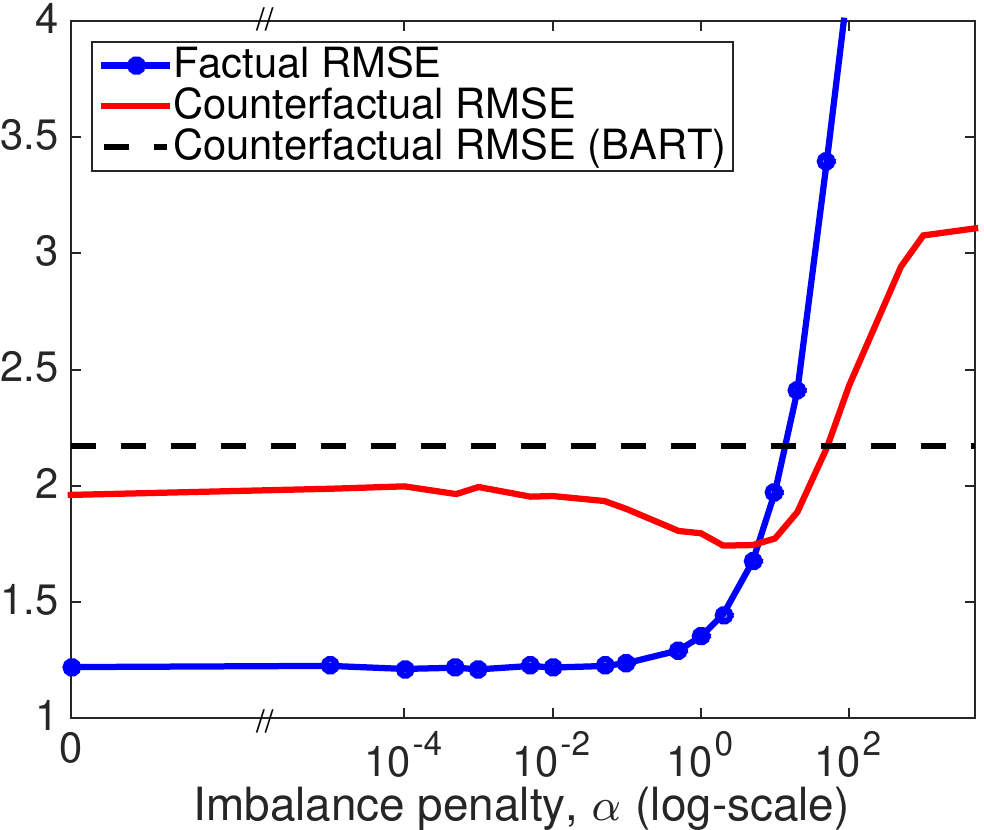}
  \end{subfigure}
  \caption{\label{fig:ihdp_alpha_sweep}Error in estimated treatment effect (ITE, PEHE) and counterfactual response (RMSE) on the IHDP dataset. Sweep over $\alpha$ for the BNN-2-2 neural network model.}
\end{figure}

\subsection{Results}
The results of the IHDP and News experiments are presented in Table~\ref{tbl:ihdp_results} and Table~\ref{tbl:topic_results} respectively. We see that, in general, the non-linear methods perform better in terms of individual prediction (ITE, PEHE). Further, we see that our proposed balancing neural network {\sc BNN-2-2} performs the best on both datasets in terms of estimating the ITE and PEHE, and is competitive on average treatment effect, ATE. Particularly noteworthy is the comparison with the network without balance penalty, {\sc NN-4}. These results indicate that our proposed regularization can help avoid overfitting the representation to the factual outcome. Figure~\ref{fig:ihdp_alpha_sweep} plots the performance of {\sc BNN-2-2} for various imbalance penalties $\alpha$. The valley in the region $\alpha=1$, and the fact that we don't experience a loss in performance for smaller values of $\alpha$, show that the penalizing imbalance in the representation $\Phi$ has the desired effect.

For the linear methods, we see that the two variable selection approaches, our proposed BLR method and {\sc Lasso + Ridge}, work the best in terms of estimating ITE. We would like to emphasize that \lridge{} is a very strong baseline and it's exciting that our theory-guided method is competitive with this approach.

On News, BLR and \lridge{} perform equally well yet again, although this time with qualitatively different results, as they do not select the same variables. Interestingly, BNN-4-0, BLR and \lridge{} all perform better on News than the standard neural network, NN-4. The performance of BART on News is likely hurt by the dimensionality of the dataset, and could improve with hyperparameter tuning.

\section{Conclusion}
As machine learning is becoming a major tool for researchers and policy makers across different fields such as healthcare and economics, causal inference becomes a crucial issue for the practice of machine learning. In this paper we focus on counterfactual inference, which is a widely applicable special case of causal inference. We cast counterfactual inference as a type of domain adaptation problem, and derive a novel way of learning representations suited for this problem.

Our models rely on a novel type of regularization criteria: learning \emph{balanced representations}, representations which have similar distributions among the treated and untreated populations. We show that trading off a balancing criterion with standard data fitting and regularization terms is both practically and theoretically prudent.

Open questions which remain are how to generalize this method for cases where more than one treatment is in question, deriving better optimization algorithms and using richer discrepancy measures.

\section*{Acknowledgements}
DS and US were supported by NSF CAREER award \#1350965.

{\normalsize{
\bibliography{cfr}
\bibliographystyle{icml2016}
}}

\newpage
\appendix
\section{Proof of Theorem 1}\label{app:proof}
We use a result implicit in the proof of Theorem 2 of \citet{cortes2014domain}, for the case where $\hypspace$ is the set of linear hypotheses over a fixed representation $\Phi$. \citet{cortes2014domain} state their result for the case of domain adaptation: in our case, the factual distribution is the so-called ``source domain'', and the counterfactual distribution is the ``target domain''.
\begin{customthm}{A1}\label{thrmA1}[\citet{cortes2014domain}]
Using the notation and assumptions of Theorem 1, for \emph{both} $Q = P^F$ and $Q=P^{CF}$:
\begin{align}\label{eq:discineq}
&\frac{\lambda}{\mu r} (\mathcal{L}_{Q}(\bfp)-\mathcal{L}_{Q}(\bcp))^2 \leq \nonumber \\
& \discPQHl  + \nonumber \\
& \min_{h\in \hypspace_l} \frac{1}{n} \left( \sum_{i=1}^n | \hat{y}_i^F(\Phi,h)- y_i^F |  + |\hat{y}_i^{CF}(\Phi,h)  - y_i^{CF}|\right)
\end{align}
\end{customthm}

In their work, \citet{cortes2014domain} assume the $\hypspace$ is a reproducing kernel Hilbert space (RKHS) for a universal kernel, and they do not consider the role of the representation $\Phi$. Since the RKHS hypothesis space they use is much stronger than the linear space $\hypspace_l$, it is often reasonable to assume that the second term in the bound \ref{eq:discineq} is small. We however cannot make this assumption, and therefore we wish to explicitly bound the term $\min_{h\in \hypspace_l} \frac{1}{n} \left( \sum_{i=1}^n | \hat{y}_i^F(\Phi,h)- y_i^F |  + |\hat{y}_i^{CF}(\Phi,h)  - y_i^{CF}|\right)$, while using the fact that we have control over the representation $\Phi$.

\begin{lemma}\label{lemma1}
Let $\{(x_i,t_i,y_i^F)\}_{i=1}^n$,  $x_i \in \mathcal{X}$, $t_i \in \{0,1\}$ and $y_i^F \in \mathcal{Y} \subseteq \R $. We assume that $\mathcal{X}$ is a metric space with metric $\mathrm{d}$, and that there exist two function $Y_0(x)$ and $Y_1(x)$ such that $y_i^F = t_i Y_1(x_i) + (1-t_i)Y_0(x_i)$, and in addition we define $y_i^{CF} = (1-t_i) Y_1(x_i) + t_i Y_0(x_i)$. We further assume that the functions $Y_0(x)$  and $Y_1(x)$ are Lipschitz continuous with constants $K_0$ and $K_1$ respectively, such that $\mathrm{d}(x_a,x_b) \leq c \implies |Y_t(x_a) - Y_t(x_b)| \leq K_t c$. Define  $j(i) \in \argmin_{j \in \{1\ldots n\} \text{ s.t. } t_j = 1-t_i} \mathrm{d}(x_j,x_i)$ to be the nearest neighbor of $x_i$ among the group that received the opposite treatment from unit $i$, for all $i \in \{1 \ldots n\}$. Let $\mathrm{d}_{i,j} = \mathrm{d}(x_i,x_j)$

For any $b \in \mathcal{Y}$ and $h \in \hypspace$:
\begin{align*}
|b-y_i^{CF} | \leq | b - y_{j(i)}^F| + K_{1-t_i} \mathrm{d}_{i,j(i)}
\end{align*}
\end{lemma}
\begin{proof}
By the triangle inequality, we have that:
$$|b-y_i^{CF} | \leq | b - y_{j(i)}^F|  + |y_{j(i)}^F - y_i^{CF}|.$$
By the Lipschitz assumption on $Y_{1-t_i}$, and since $\mathrm{d}(x_i,x_{j(i)}) \leq \mathrm{d}_{i,j(i)}$, we obtain that $$ |y_{j(i)}^F - y_i^{CF}| = |Y_{1-t_i}(x_{j(i)}) - Y_{1-t_i}(x_i) | \leq \mathrm{d}_{i,j(i)} K_{1-t_i}.$$
By definition $y_i^{CF} = Y_{1-t_i}(x_i)$. In addition, by definition of $j(i)$, we have $t_{j(i)} = 1-t_i$, and therefore $y_{j(i)}^F = Y_{1-t_i}(x_{j(i)})$, proving the equality. The inequality is an immediate consequence of the Lipschitz property.
\end{proof}

We restate Theorem 1 and prove it.
\begin{apptheorem}
For a sample $\{(x_i,t_i,y_i^F)\}_{i=1}^n$,  $x_i \in \mathcal{X}$, $t_i \in \{0,1\}$ and $y_i \in \mathcal{Y} $,  recall that $y_i^F = t_i Y_1(x_i) + (1-t_i)Y_0(x_i)$, and in addition define $y_i^{CF} = (1-t_i) Y_1(x_i) + t_i Y_0(x_i)$. For a given representation function $\Phi :\mathcal{X} \rightarrow \R^d$, let $\pF = (\Phi(x_1), t_1),  \ldots, (\Phi(x_n), t_n)$, $\pC = (\Phi(x_1), 1-t_1),  \ldots, (\Phi(x_n), 1-t_n) $. We assume that $\mathcal{X}$ is a metric space with metric $\mathrm{d}$, and that the potential outcome functions $Y_0(x)$  and $Y_1(x)$ are Lipschitz continuous with constants $K_0$ and $K_1$ respectively, such that $\mathrm{d}(x_a,x_b) \leq c \implies |Y_t(x_a) - Y_t(x_b)| \leq K_t c$.

Let $\hypspace_l \subset\R^{d+1}$ be the space of linear functions, and
for $\beta \in \hypspace_l$, let $\mathcal{L}_{P}(\beta) = \mathbb{E}_{(x,t,y) \sim P} \left[L(\beta(x,t),y)\right]$ be the expected loss of $\beta$ over distribution $P$. Let $r = max\left(\mathbb{E}_{(x,t) \sim P^F}\left[\|[\Phi(x), t]\|_2\right],\mathbb{E}_{(x,t) \sim P^{CF}}\left[\|[\Phi(x), t]\|_2\right]\right)$.
For $\lambda >0$, let $\bfp = \argmin_{\beta \in \hypspace_l} \mathcal{L}_{\pF}(\beta) + \lambda \|\beta\|_2^2$, and  $\bcp$ similarly for $\pC$, i.e. $\bfp$ and $\bcp$ are the ridge regression solutions for the factual and counterfactual empirical distributions, respectively.

Let $\hat{y}_i^F(\Phi,h) = h^\top [\Phi(x_i) , t_i]$ and $\hat{y}_i^{CF}(\Phi,h) = h^\top [\Phi(x_i) , \, 1-t_i]$ be the outputs of the hypothesis $h \in \hypspace_l$ over the representation $\Phi(x_i)$ for the factual and counterfactual settings of $t_i$, respectively. Finally, for each $i \in \{1 \ldots n\}$, let $j(i) \in \argmin_{j \in \{1\ldots n\} \text{ s.t. } t_j = 1-t_i} \mathrm{d}(x_j,x_i)$ be the nearest neighbor of $x_i$ among the group that received the opposite treatment from unit $i$. Let $\mathrm{d}_{i,j} = \mathrm{d}(x_i,x_j)$.

Then for \emph{both} $Q = P^F$ and $Q=P^{CF}$ we have:
\begin{align}
&\frac{\lambda}{\mu r} (\mathcal{L}_{Q}(\bfp)-\mathcal{L}_{Q}(\bcp))^2 \leq \label{ineq1} \\
& \discPQHl  + \nonumber\\
& \min_{h\in \hypspace_l} \frac{1}{n} \sum_{i=1}^n \left( | \hat{y}_i^F(\Phi,h)- y_i^F |  + |\hat{y}_i^{CF}(\Phi,h)  - y_i^{CF}|\right)  \leq  \label{ineq2} \\
& \discPQHl +  \nonumber \\
& \min_{h\in \hypspace_l} \frac{1}{n} \sum_{i=1}^n \left( | \hat{y}_i^F(\Phi,h)- y_i^F |  + |\hat{y}_i^{CF}(\Phi,h)  - y_{j(i)}^F|\right) + \nonumber\\
& \frac{K_0}{n} \sum_{i: t_i=1} \mathrm{d}_{i,j(i)} + \frac{K_1}{n} \sum_{i: t_i=0} \mathrm{d}_{i,j(i)}. \nonumber
\end{align}
\end{apptheorem}
\begin{proof}
Inequality \eqref{ineq1} is immediate by Theorem \ref{thrmA1}. In order to prove inequality \eqref{ineq2}, we apply Lemma \ref{lemma1}, setting $b = \hat{y}_i^{CF}$ and summing over the $i$.
\end{proof}

\end{document}


\twocolumn[
\icmltitle{\mytitle}

\icmlauthor{Fredrik D. Johansson$^*$}{frejohk@chalmers.se}
\icmladdress{CSE, Chalmers University of Technology, G\"{o}teborg, SE-412 96, Sweden}
\icmlauthor{Uri Shalit$^*$}{shalit@cs.nyu.edu}
\icmlauthor{David Sontag}{dsontag@cs.nyu.edu}
\icmladdress{CIMS, New York University, 251 Mercer Street, New York, NY 10012 USA}
\icmlauthor{$^*$ Equal contribution}{}

\icmlkeywords{causal inference, representation learning, domain adaptation}

\vskip 0.3in
]

\appendix
\section{Proof of Theorem 1}\label{app:proof}
We use a result implicit in the proof of Theorem 2 of \citet{cortes2014domain}, for the case where $\hypspace$ is the set of linear hypotheses over a fixed representation $\Phi$. \citet{cortes2014domain} state their result for the case of domain adaptation: in our case, the factual distribution is the so-called ``source domain'', and the counterfactual distribution is the ``target domain''.
\begin{customthm}{A1}\label{thrmA1}[\citet{cortes2014domain}]
Using the notation and assumptions of Theorem 1, for \emph{both} $Q = P^F$ and $Q=P^{CF}$:
\begin{align}\label{eq:discineq}
&\frac{\lambda}{\mu r} (\mathcal{L}_{Q}(\bfp)-\mathcal{L}_{Q}(\bcp))^2 \leq \nonumber \\
& \discPQHl  + \nonumber \\
& \min_{h\in \hypspace_l} \frac{1}{n} \left( \sum_{i=1}^n | \hat{y}_i^F(\Phi,h)- y_i^F |  + |\hat{y}_i^{CF}(\Phi,h)  - y_i^{CF}|\right)
\end{align}
\end{customthm}

In their work, \citet{cortes2014domain} assume the $\hypspace$ is a reproducing kernel Hilbert space (RKHS) for a universal kernel, and they do not consider the role of the representation $\Phi$. Since the RKHS hypothesis space they use is much stronger than the linear space $\hypspace_l$, it is often reasonable to assume that the second term in the bound \ref{eq:discineq} is small. We however cannot make this assumption, and therefore we wish to explicitly bound the term $\min_{h\in \hypspace_l} \frac{1}{n} \left( \sum_{i=1}^n | \hat{y}_i^F(\Phi,h)- y_i^F |  + |\hat{y}_i^{CF}(\Phi,h)  - y_i^{CF}|\right)$, while using the fact that we have control over the representation $\Phi$.

\begin{lemma}\label{lemma1}
Let $\{(x_i,t_i,y_i^F)\}_{i=1}^n$,  $x_i \in \mathcal{X}$, $t_i \in \{0,1\}$ and $y_i^F \in \mathcal{Y} \subseteq \R $. We assume that $\mathcal{X}$ is a metric space with metric $\mathrm{d}$, and that there exist two function $Y_0(x)$ and $Y_1(x)$ such that $y_i^F = t_i Y_1(x_i) + (1-t_i)Y_0(x_i)$, and in addition we define $y_i^{CF} = (1-t_i) Y_1(x_i) + t_i Y_0(x_i)$. We further assume that the functions $Y_0(x)$  and $Y_1(x)$ are Lipschitz continuous with constants $K_0$ and $K_1$ respectively, such that $\mathrm{d}(x_a,x_b) \leq c \implies |Y_t(x_a) - Y_t(x_b)| \leq K_t c$. Define  $j(i) \in \argmin_{j \in \{1\ldots n\} \text{ s.t. } t_j = 1-t_i} \mathrm{d}(x_j,x_i)$ to be the nearest neighbor of $x_i$ among the group that received the opposite treatment from unit $i$, for all $i \in \{1 \ldots n\}$. Let $\mathrm{d}_{i,j} = \mathrm{d}(x_i,x_j)$

For any $b \in \mathcal{Y}$ and $h \in \hypspace$:
\begin{align*}
|b-y_i^{CF} | \leq | b - y_{j(i)}^F| + K_{1-t_i} \mathrm{d}_{i,j(i)}
\end{align*}
\end{lemma}
\begin{proof}
By the triangle inequality, we have that:
$$|b-y_i^{CF} | \leq | b - y_{j(i)}^F|  + |y_{j(i)}^F - y_i^{CF}|.$$
By the Lipschitz assumption on $Y_{1-t_i}$, and since $\mathrm{d}(x_i,x_{j(i)}) \leq \mathrm{d}_{i,j(i)}$, we obtain that $$ |y_{j(i)}^F - y_i^{CF}| = |Y_{1-t_i}(x_{j(i)}) - Y_{1-t_i}(x_i) | \leq \mathrm{d}_{i,j(i)} K_{1-t_i}.$$
By definition $y_i^{CF} = Y_{1-t_i}(x_i)$. In addition, by definition of $j(i)$, we have $t_{j(i)} = 1-t_i$, and therefore $y_{j(i)}^F = Y_{1-t_i}(x_{j(i)})$, proving the equality. The inequality is an immediate consequence of the Lipschitz property.
\end{proof}

We restate Theorem 1 and prove it.
\begin{apptheorem}
For a sample $\{(x_i,t_i,y_i^F)\}_{i=1}^n$,  $x_i \in \mathcal{X}$, $t_i \in \{0,1\}$ and $y_i \in \mathcal{Y} $,  recall that $y_i^F = t_i Y_1(x_i) + (1-t_i)Y_0(x_i)$, and in addition define $y_i^{CF} = (1-t_i) Y_1(x_i) + t_i Y_0(x_i)$. For a given representation function $\Phi :\mathcal{X} \rightarrow \R^d$, let $\pF = (\Phi(x_1), t_1),  \ldots, (\Phi(x_n), t_n)$, $\pC = (\Phi(x_1), 1-t_1),  \ldots, (\Phi(x_n), 1-t_n) $. We assume that $\mathcal{X}$ is a metric space with metric $\mathrm{d}$, and that the potential outcome functions $Y_0(x)$  and $Y_1(x)$ are Lipschitz continuous with constants $K_0$ and $K_1$ respectively, such that $\mathrm{d}(x_a,x_b) \leq c \implies |Y_t(x_a) - Y_t(x_b)| \leq K_t c$.

Let $\hypspace_l \subset\R^{d+1}$ be the space of linear functions, and
for $\beta \in \hypspace_l$, let $\mathcal{L}_{P}(\beta) = \mathbb{E}_{(x,t,y) \sim P} \left[L(\beta(x,t),y)\right]$ be the expected loss of $\beta$ over distribution $P$. Let $r = max\left(\mathbb{E}_{(x,t) \sim P^F}\left[\|[\Phi(x), t]\|_2\right],\mathbb{E}_{(x,t) \sim P^{CF}}\left[\|[\Phi(x), t]\|_2\right]\right)$.
For $\lambda >0$, let $\bfp = \argmin_{\beta \in \hypspace_l} \mathcal{L}_{\pF}(\beta) + \lambda \|\beta\|_2^2$, and  $\bcp$ similarly for $\pC$, i.e. $\bfp$ and $\bcp$ are the ridge regression solutions for the factual and counterfactual empirical distributions, respectively.

Let $\hat{y}_i^F(\Phi,h) = h^\top [\Phi(x_i) , t_i]$ and $\hat{y}_i^{CF}(\Phi,h) = h^\top [\Phi(x_i) , \, 1-t_i]$ be the outputs of the hypothesis $h \in \hypspace_l$ over the representation $\Phi(x_i)$ for the factual and counterfactual settings of $t_i$, respectively. Finally, for each $i \in \{1 \ldots n\}$, let $j(i) \in \argmin_{j \in \{1\ldots n\} \text{ s.t. } t_j = 1-t_i} \mathrm{d}(x_j,x_i)$ be the nearest neighbor of $x_i$ among the group that received the opposite treatment from unit $i$. Let $\mathrm{d}_{i,j} = \mathrm{d}(x_i,x_j)$.

Then for \emph{both} $Q = P^F$ and $Q=P^{CF}$ we have:
\begin{align}
&\frac{\lambda}{\mu r} (\mathcal{L}_{Q}(\bfp)-\mathcal{L}_{Q}(\bcp))^2 \leq \label{ineq1} \\
& \discPQHl  + \nonumber\\
& \min_{h\in \hypspace_l} \frac{1}{n} \sum_{i=1}^n \left( | \hat{y}_i^F(\Phi,h)- y_i^F |  + |\hat{y}_i^{CF}(\Phi,h)  - y_i^{CF}|\right)  \leq  \label{ineq2} \\
& \discPQHl +  \nonumber \\
& \min_{h\in \hypspace_l} \frac{1}{n} \sum_{i=1}^n \left( | \hat{y}_i^F(\Phi,h)- y_i^F |  + |\hat{y}_i^{CF}(\Phi,h)  - y_{j(i)}^F|\right) + \nonumber\\
& \frac{K_0}{n} \sum_{i: t_i=1} \mathrm{d}_{i,j(i)} + \frac{K_1}{n} \sum_{i: t_i=0} \mathrm{d}_{i,j(i)}. \nonumber
\end{align}
\end{apptheorem}
\begin{proof}
Inequality \eqref{ineq1} is immediate by Theorem \ref{thrmA1}. In order to prove inequality \eqref{ineq2}, we apply Lemma \ref{lemma1}, setting $b = \hat{y}_i^{CF}$ and summing over the $i$.
\end{proof}

{\small{
\bibliography{cfr}
\bibliographystyle{icml2016}
}}